\documentclass{article}
\usepackage{spconf,amsmath,graphicx,hyperref}
\usepackage{pifont}
\usepackage{stfloats}
\usepackage{breqn}
\usepackage{cuted}
\usepackage{multirow}
\usepackage{booktabs}
\usepackage{threeparttable}
\usepackage{subcaption}
\usepackage{algorithm}
\usepackage{algpseudocode}
\usepackage{lineno,hyperref}
\usepackage{amsmath,graphicx}
\usepackage{amsthm}

\usepackage{amssymb}
\usepackage{booktabs}
\usepackage{cite}
\usepackage{tablefootnote}
\usepackage{xcolor}
\usepackage{mathtools}
\usepackage{cases}
\usepackage{tabularx}
\usepackage{caption}  
\usepackage[final]{changes}
\usepackage{amsthm}
\newtheorem{theorem}{Theorem}
\newtheorem{lemma}{Lemma}

\newtheorem{assumption}{Assumption}
\newtheorem{remark}{Remark}

\newcommand\bx{\mathbf{x}}

\newcommand\bE{\mathbb{E}}

\newcommand\obx{{\mathbf{x}}}

\newcommand\by{\mathbf{y}}

\newcommand\bz{{\mathbf{z}}}

\newcommand\nbf{\nabla \mathbf{f}}

\definecolor{purple}{rgb}{0.6, 0.2, 0.8}

\usepackage{multibib}
\definecolor{UniBlue}{RGB}{83,121,170}
\definecolor{DarkGray}{RGB}{90,90,90}
\definecolor{LightGray}{RGB}{150,150,150}
\definecolor{oldTextGreen}{RGB}{115,155,15}
\definecolor{teal}{RGB}{100, 200,10}
\definecolor{oldOcean}{RGB}{23,142,189}
\definecolor{Ocean}{RGB}{30,106,181}
\definecolor{BG}{RGB}{215,215,215}
\definecolor{darkred}{RGB}{204,41,0}

\usepackage{hyperref}


\allowdisplaybreaks


\title{An Efficient Subspace Algorithm for Federated Learning on Heterogeneous Data}
%
\name{ Jiaojiao Zhang$^1$, Yuqi Xu$^2$ and Kun Yuan$^2$ \thanks{This work was supported by the National Natural Science Foundation of China under Grants 92370121, 12301392, and W2441021. Jiaojiao Zhang is with the Intelligent Computing Research Center, Great Bay University, Dongguan, China. Yuqi Xu and Kun Yuan (Corresponding author) are with the Center for Machine Learning Research, Peking University, Beijing, China.}}
\address{Great Bay University$^1$, Peking University$^2$}
%
%
%
\begin{document}
%
\maketitle
\begin{abstract}
This work addresses the key challenges of applying federated learning to large-scale deep neural networks, particularly the issue of client drift due to data heterogeneity across clients and the high costs of communication, computation, and memory.
We propose {\bf FedSub},   an efficient subspace algorithm for federated learning on heterogeneous data. Specifically, FedSub utilizes subspace projection to guarantee local updates of each client within low-dimensional subspaces, thereby reducing communication, computation, and memory costs. Additionally,
it incorporates low-dimensional dual variables to mitigate
client drift.  We provide convergence analysis that reveals the impact of key factors such as step size and subspace projection matrices on convergence. Experimental results demonstrate its efficiency.
\end{abstract}
\begin{keywords}
Federated learning, heterogeneous data, low-dimensional subspaces, dual variable
\end{keywords}
\section{Introduction}
Federated Learning (FL) enables collaborative training across distributed clients while preserving data privacy \cite{fedavg,zhang2024hybrid}. Its application to large-scale tasks, such as training large language models (LLMs), is promising, as it leverages distributed data resources and parallel computation to accelerate training and improve performance \cite{tang2024fedlion}.

Despite its promise, FL encounters significant challenges when applied to large-scale model training. A major issue is data heterogeneity (i.e., non-IID data distributions), which causes client drift and degrades global model performance \cite{karimireddy2020scaffold}.  Furthermore, edge clients typically suffer from limited communication bandwidth, computation power, and memory capacity. 
These limitations are exacerbated by the high dimensionality of large models, resulting in severe communication, computation, and memory bottlenecks that hinder the scalability of FL in practical deployments \cite{FedFTG}.

In single-machine settings, subspace methods have demonstrated effectiveness in improving training efficiency by projecting model updates onto carefully constructed low-dimensional subspaces \cite{hu2022lora,zhao2024galore,he2024subspace,chen2025memory}. These approaches are motivated by the empirical observation that model parameters or gradient matrices in Deep Neural Networks (DNN) often exhibit low-rank structures during training \cite{he2024subspace}. 
However, single-machine subspace algorithms are not directly applicable to FL, as they lack mechanisms to address client drift under data heterogeneity and do not support efficient communication in distributed environments. 

While some studies combine subspace methods with FL \cite{sub-sun2024improving, sub-bai2024federated, sub-park2024communication, fedlora-1-lee2025fedsvd,  fedlora-3-grativol2024flocora, du2024communication,sub-mahla2024exploring}, these approaches neither mitigate client drift nor reduce communication, computation, and memory costs simultaneously. 
For example, FlexLoRA \cite{sub-bai2024federated} supports heterogeneous ranks of LoRA \cite{hu2022lora} across clients, but requires each client to transmit two low-rank matrices per communication round. 
%
FedLoRU \cite{sub-park2024communication} uses low-rank matrix factorization, but clients need to solve constrained subspace optimization problems exactly and transmit two factor matrices, leading to high computation and communication costs. Moreover, it does not effectively address client drift. Beyond LoRA-based methods, FedFTG \cite{sub-mahla2024exploring} extends GaLore \cite{zhao2024galore} to FL by integrating it into the FedAvg. However, each client still transmits the full-space models to the server,  failing to reduce communication costs. Additionally, FedFTG cannot overcome client drift and provides convergence guarantees only for convex objectives. 

Motivated by this, we propose an efficient {\bf Sub}space algorithm for nonconvex {\bf Fed}erated learning on heterogeneous data, abbreviated as {\bf FedSub}. The main idea is to use low-dimensional subspace projections to improve communication, computation, and memory efficiency, while introducing dual variables that lie in low-dimensional subspaces to mitigate the client drift issue. The key contributions are summarized as follows:
{\bf (i)} We use subspace projection to ensure that local updates of each client remain within low-dimensional subspaces, leading to simultaneous reductions in communication, computation, and memory costs. Additionally, we incorporate dual variables that also lie in subspaces to mitigate client drift. The efficiency of our FedSub is as summarized in Table \ref{tab}. When the projection matrix becomes the identity, FedSub naturally reduces to a full-space primal-dual FL algorithm.
{\bf (ii)} We provide convergence analysis that characterizes the impact of key factors such as the step size and subspace projection matrices on convergence. When setting the projection matrix as the identity, our theoretical results recover the convergence properties of the full-space algorithm.
{\bf (iii)} Experiments validate the effectiveness of our FedSub. 

{\bf Notation.} 
Define $[n] = \{1, \ldots, n\}$. For a neural network with $L$ layers,  $x_l \in \mathbb{R}^{m_l \times d_l}$ is the weight matrix of the $l$-th layer, and the collection of all layer weights is $x = \{x_l\}_{l=1}^L$. Given $n$ such collections $x_1, \ldots, x_n$ where $x_i = \{x_{i,l}\}_{l=1}^L$, their layerwise average is $\frac{1}{n}\sum_{i=1}^n x_i = \left\{\frac{1}{n}\sum_{i=1}^n x_{i,l}\right\}_{l=1}^L$, with addition and subtraction defined similarly layerwise. For two sets of scalars $m = \{m_l\}_{l=1}^L$ and $r = \{r_l\}_{l=1}^L$, $m \times r$ denotes the layerwise dimensions, with the $l$-th layer of size $m_l \times r_l$. Layerwise scalar multiplication is defined as $\frac{m}{r} x := \{\frac{m_l}{r_l} x_l\}_{l=1}^L$, and we define $\theta_m = \max_l \frac{m_l}{r_l}$, $\theta_r = \max_l \frac{r_l}{m_l}$. For a function $f(x)$, the gradient is taken layerwise as $\nabla f(x) = \{\nabla_{x_l} f(x)\}_{l=1}^L$. Given $P = \{P_l\}_{l=1}^L$ with $P_l \in \mathbb{R}^{m_l \times r_l}$, the layerwise transpose and multiplication with the gradient are $P^T = \{P_l^T\}_{l=1}^L$ and $P^T \nabla f(x) := \{P_l^T \nabla_{x_l} f(x)\}_{l=1}^L$. The norm $\|x\|^2$ is defined as $\sum_{l=1}^L \|x_l\|_F^2$, where $\|\cdot\|_F$ is the Frobenius norm, and $\mathbb{E}[\cdot]$ denotes expectation. The identity matrix $I$ refers to a matrix or collection of matrices of appropriate size depending on context.
\begin{table*}[t]
\centering
\caption{ $\mathcal{O}$-notation is omitted. $C_{g}(rd)$ denotes the computation cost of evaluating a gradient with size $r \times d$. $M_g(rd)$ denotes the memory cost of computing a gradient of dimension $r\times d$ including intermediate quantities such as activations. }
\label{tab}
\vspace{-1mm}
\begin{tabular}{c c c c}
\toprule
              & our FedSub & our $P^k=I$ & FedAvg \\ \hline
communication (uplink) &  $rd$   &  $md$   & $md$       \\ \hline
computation   & $ \tau( mrd+C_{g}(rd) )+ 2mrd $    &  $\tau C_{g}(md)$   &  $\tau C_{g}(md)$ \\ \hline
memory        &  $3rd+M_g(rd)+2rm+md$   & $3md+M_g(md)$     &    $md+M_g(md)$    \\ 
\bottomrule
\end{tabular}
\end{table*}

\vspace{-1mm}
\section{Problem Formulation and Subspace}
\vspace{-1mm}
We consider training a DNN with $L$ layers in the FL setting. A server coordinates $n$ clients, where each client $i \in [n]$ holds a local loss function $f_i(x)$. 
The global objective is to minimize the average loss across all clients:
\begin{equation}\label{eqn:basic_opt}
\min_{x = \{x_l\}_{l=1}^L} \; f(x) = \frac{1}{n} \sum_{i=1}^n f_i(x),
\end{equation}
where  $x = \{x_l\}_{l=1}^L$ denotes the collection of DNN weights across all layers.

To solve \eqref{eqn:basic_opt} in a distributed manner, we introduce a local copy of the weights for each client, denoted as $x_i = \{x_{i,l}\}_{l=1}^L$, where $x_{i,l} \in \mathbb{R}^{m_l \times d_l}$ is the local weight matrix for layer $l$ on client $i$. We reformulate \eqref{eqn:basic_opt} as:
\begin{equation}\label{eqn:opt_consensus}
\begin{aligned}	
\min_{\{x_1,\ldots,  x_{n}\}}~ \frac{1}{n}\sum_{i=1}^n f_i(x_i),\quad\text{s.t.}~  x_i-\frac{1}{n} \sum_{i=1}^{n} x_i=0, ~\forall i.
\end{aligned}
\end{equation}
As we show in Appendix \ref{sec-derive}, by handling the consensus constraint in \eqref{eqn:opt_consensus} via dual variables, we can derive an algorithm that uses these dual variables to mitigate the client drift issue.

To improve the efficiency of FL, we restrict local updates on each client to low-dimensional subspaces. For each layer $l \in [L]$, let $P_l \in \mathbb{R}^{m_l \times r_l}$ denote a subspace projection matrix, where $r_l \ll m_l$. A full-space variable $x_l \in \mathbb{R}^{m_l \times d_l}$ is projected onto a low-dimensional subspace via $P_l^T x_l \in \mathbb{R}^{r_l \times d_l}$. 
Inspired by \cite{he2024subspace,chen2025memory}, we employ random subspace projection matrices and make the following assumption:
\begin{assumption}\label{asm-p}
For each layer $l$ and communication round $k$, the random subspace projection matrices $P_l^k$ are independent for all $l,k$ and satisfy $(P_l^k)^T P_l^k = \frac{m_l}{r_l} I $ and $ \mathbb{E}[P_l^k (P_l^k)^T] = I, \forall l, k.$ 
\end{assumption}
Random subspace projection matrices satisfying Assumption~\ref{asm-p} can be constructed as described in~\cite{chen2025memory}; specific examples are provided in Section \ref{sec-experiment-lr}.

\vspace{-1mm}
\section{Algorithm}
\vspace{-1mm}
{\bf Proposed Algorithm.} We propose an efficient subspace FL algorithm that leverages low-dimensional subspace projections to reduce communication, computation, and memory overhead, while use dual variables to mitigate client drift caused by heterogeneous data. Our proposed algorithm is summarized in Algorithm~\ref{alg-fl}. It is derived from the primal-dual hybrid gradient (PDHG) method \cite{alghunaim2022unified}, and the detailed derivation can be found in Appendix~\ref{app-derive}.





\noindent{\bf Dual Variables to Mitigate Client Drift.}
The use of dual variables enables FedSub to mitigate client drift. Specifically, in Appendix \ref{sec-simplify}, we prove that 
\begin{equation}\label{eq-vr}
\Lambda_i^{k}= (P^{k})^T P^{k-1}\eta \sum_{t=0}^{\tau-1} \left( \bar{g}(B_i^{k-1,t}) - g_i(B_i^{k-1,t})\right),    
\end{equation}
where ${g_i(B_i^{k-1,t})}={\frac{r}{m}} (P^{k-1})^T\nabla f_i(x^{k-1}+P^{k-1} B_i^{k-1,t})$ and $\bar{g}(B_i^{k-1,t})=\frac{1}{n}\sum_{i=1}^n  g_i(B_i^{k-1,t}) $. Thus, the local updates in Line 6 of Algorithm \ref{alg-fl} are equivalent to 
\begin{align}\label{eq-B-6}
B_i^{k,t+1}&= B_i^{k,t} -\eta \Big({g_i(B_i^{k,t})}\\
&+ (P^k)^T P^{k-1} \frac{1}{\tau}\sum_{t=0}^{\tau-1} \left(\bar{g}(B_i^{k-1,t}) - g_i(B_i^{k-1,t}) \right)\Big),  \notag 
\end{align}   
where \( (P^k)^T P^{k-1} \) transforms  
$\frac{1}{\tau} \sum_{t=0}^{\tau-1} ( \bar{g}(B_i^{k-1,t}) - g_i(B_i^{k-1,t}) )$ 
from the previous subspace of \( (P^{k-1})^T \) to the current subspace of \( (P^k)^T \), ensuring that the updates \( B_i^{k,t} \) for all \( t \in [\tau] \) stay within the current subspace of \( (P^k)^T \).

\begin{remark}
In \eqref{eq-B-6}, we apply variance reduction to mitigate client drift by using the previous round’s averaged global gradient \( \frac{1}{\tau}\sum_{t=0}^{\tau-1}\bar{g}(B_i^{k-1,t}) \) and replacing the old local gradient \( g_i(B_i^{k-1,t}) \) with the current one \( g_i(B_i^{k,t}) \). This allows each client to incorporate averaged gradient information during local updates, thereby reducing client drift. When \(P^k = I\), the correction becomes a full-space method, as in SCAFFOLD and FedLin \cite{karimireddy2020scaffold,FedLin}. However, SCAFFOLD and FedLin—like directly implementing \eqref{eq-B-6}—require uploading local gradients \(\frac{1}{\tau}\sum_{t=0}^{\tau-1} g_i(B_i^{k-1,t})\) each round, causing extra communication. To avoid this, we propose an equivalent and communication-efficient implementation using dual variables (Line~6 of Algorithm~\ref{alg-fl}). As shown in \eqref{eq-vr}, these dual variables implicitly achieve correction like SCAFFOLD and FedLin, but without additional communication.

\end{remark}

\noindent{\bf Subspace Projection to Improve Efficiency.}
The use of subspace projection simultaneously reduces the communication,  computation, and memory complexity of Algorithm \ref{alg-fl}, as summarized in Table \ref{tab}.   Since the complexity of the entire DNN can be obtained by extending the analysis of a single layer, we focus on one layer for simplicity. The detailed efficiency analysis is presented in Appendix~\ref{app-eff-analysis}.

As shown in Table \ref{tab},  our FedSub reduces uplink communication and memory costs. 
Although FedSub introduces additional matrix multiplication operations with a cost of $\mathcal{O}(\tau mrd + 2mrd)$, it significantly reduces the complexity of gradient computations from $\mathcal{O}(\tau C_g(md))$ to $\mathcal{O}(\tau C_g(rd))$. Given that gradient evaluation typically incurs higher computation costs than matrix multiplication in DNN,  FedSub also lead to a reduction in overall computation complexity.

\begin{algorithm}[htbp]
\caption{Proposed Algorithm (FedSub)}
\label{alg-fl}
\begin{algorithmic}[1]
\State $ \textbf{Input:}$ $K$, $\tau$, $\eta$, $P^k=\{P_l^k\}_{l=1}^L$, $\Lambda_i^0=0$, $ x^{0}$
\For {$k = 0, 1, \ldots, K-1$}
    \State \textbf{Client $i$} 
    \State Set $B_i^{k,0} = 0$
    \For {$t = 0, 1, \ldots, \tau-1$}
        \State Update the variable in the subspace
        $$
        B_i^{k,t+1} = B_i^{k,t} - \eta \left( \frac{r}{m} (P^k)^T \nabla f_i(x^k + P^k B_i^{k,t}) + \frac{\Lambda_i^k}{\eta \tau}  \right)
        $$
    \EndFor
    \State Send $B_i^{k,\tau}$ to the server
    \State \textbf{Server} 
    \State Compute $\frac{1}{n} \sum_{i=1}^n B_i^{k,\tau}$ and broadcast it to all clients
    \State \textbf{Client $i$}
    \State Update the dual variable in the subspace
    $$\Lambda_i^{k+1} = (P^{k+1})^T \left(P^k \left( \Lambda_i^k + B_i^{k,\tau} - \frac{1}{n} \sum_{i=1}^n B_i^{k,\tau} \right) \right)
    $$
    \State \textbf{Server} 
    \State Update 
    $
    x^{k+1} = x^k + P^k \frac{1}{n} \sum_{i=1}^n B_i^{k,\tau}
    $
    \State Broadcast $x^{k+1}$ to all clients
\EndFor
\end{algorithmic}
\end{algorithm}

\section{Analysis}
In our analysis, we introduce the following assumptions. 
\begin{assumption}\label{asm-smooth}
Each function $f_i(x)$ is $L_f$-smooth such that $\|\nabla f_i(x)-\nabla f_i(z)\|\le L_f\|x-z\|, ~\forall i, x,z.  $
\end{assumption}

\begin{assumption}\label{asm-Bg}
There exists a constant \( B_g > 0 \) such that the average squared norm of the gradients is bounded:
$\frac{1}{n}\sum_{i=1}^n\|\nabla f_i(x)\|^2 \le B_g^2,~\forall x.$
\end{assumption}
Assumption \ref{asm-Bg} is introduced specifically to handle the variance of the random subspace projection matrices $P^k$ and is unnecessary when $P^k=I$. 

\begin{theorem}\label{thm}
Under Assumptions \ref{asm-p}-\ref{asm-Bg}, for Algorithm \ref{alg-fl}, if $\eta \le \mathcal{O}\left(\frac{1}{\theta_r\theta_m^2 L_f^2 n \tau}\right)$,  we have 
\begin{equation*}
\begin{aligned}
\frac{1}{K}\sum_{k=0}^{K-1} \bE\|\nabla f(x^k)\|^2 
\le \mathcal{O}\left( \frac{\theta_m}{K  \tau \eta} + \eta \tau {\theta_r^2} \theta_m^2  \left(\theta_m-1 \right) {\frac{1}{n}} B_g^2 \right). 
\end{aligned}
\end{equation*}
\end{theorem}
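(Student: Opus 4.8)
The plan is to run a descent-lemma argument on the server iterate $x^k$, exploiting the unbiasedness $\bE[P^k (P^k)^T]=I$ from Assumption~\ref{asm-p} together with the structure of the dual update. The first observation is that the \emph{average} dual variable stays at zero: averaging the update in Line~6 over $i$ and using $\frac1n\sum_i(B_i^{k,\tau}-\frac1n\sum_j B_j^{k,\tau})=0$ gives $\frac1n\sum_i\Lambda_i^{k+1}=(P^{k+1})^TP^k\frac1n\sum_i\Lambda_i^k$, so with $\Lambda_i^0=0$ we obtain $\frac1n\sum_i\Lambda_i^k=0$ for every $k$. Summing the local recursion over $t$ (with $B_i^{k,0}=0$) and averaging over $i$ then yields the clean server dynamics
\begin{equation*}
x^{k+1}=x^k-\eta P^k\sum_{t=0}^{\tau-1}\bar{g}(B^{k,t}),\qquad \bar{g}(B^{k,t})=\frac1n\sum_{i=1}^n \tfrac{r}{m}(P^k)^T\nabla f_i(x^k+P^kB_i^{k,t}),
\end{equation*}
in which the dual variables have disappeared from the mean dynamics but still steer each $B_i^{k,t}$, through \eqref{eq-B-6}, away from the heterogeneity-induced drift.

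Next I would apply $L_f$-smoothness to get $f(x^{k+1})\le f(x^k)+\langle\nabla f(x^k),x^{k+1}-x^k\rangle+\frac{L_f}{2}\|x^{k+1}-x^k\|^2$ and take the conditional expectation over the independent projection $P^k$. Decomposing $\bar{g}(B^{k,t})$ into the ideal term $\tfrac{r}{m}(P^k)^T\nabla f(x^k)$ plus a drift remainder, the unbiasedness $\bE[P^k(P^k)^T]=I$ turns the leading inner product into $-\eta\sum_{t}\sum_l \tfrac{r_l}{m_l}\|\nabla_{x_l}f(x^k)\|^2$, and since $\tfrac{r_l}{m_l}\ge 1/\theta_m$ this is at most $-\tfrac{\eta\tau}{\theta_m}\|\nabla f(x^k)\|^2$; this is exactly what produces the $\theta_m/(K\tau\eta)$ factor after telescoping. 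Everything else—the gap between $\nabla f_i(x^k+P^kB_i^{k,t})$ and $\nabla f(x^k)$, the second-order term, and the residual variance of $P^k(P^k)^T$—must be shown to be a higher-order error controlled by the step-size condition.

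The technical heart, and the step I expect to be hardest, is bounding these error terms jointly. I would first control the local drift $\|B_i^{k,t}\|$: unrolling the recursion and using $(P^k)^TP^k=\tfrac{m}{r}I$ together with Assumptions~\ref{asm-smooth}--\ref{asm-Bg} bounds $\sum_t\|P^kB_i^{k,t}\|^2$ by $\mathcal{O}(\eta^2\tau^2)$ times gradient and dual-correction magnitudes, where the $\theta_m,\theta_r$ factors enter through the norm inflation of $(P^k)^T$ and of the scaling $\tfrac{r}{m}$. The delicate part is the variance-reduction correction in \eqref{eq-B-6}: because it is carried from the subspace $(P^{k-1})^T$ into the current one via $(P^k)^TP^{k-1}$, its contribution couples consecutive rounds and does not vanish in conditional expectation. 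Bounding it requires simultaneously tracking the previous-round gradient average and the operator norm of $(P^k)^TP^{k-1}$, and this is where Assumption~\ref{asm-Bg} is essential: the leftover variance of the random projection, which is present only when $\theta_m>1$, generates the $\theta_r^2\theta_m^2(\theta_m-1)\tfrac1n B_g^2$ term, the factor $(\theta_m-1)$ reflecting that $\mathrm{Var}(P^k(P^k)^T)$ collapses to zero exactly when $r_l=m_l$.

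Finally I would assemble the per-round inequality, absorbing all drift and variance contributions that scale like $\eta^2$ (or higher) into the leading descent term by imposing $\eta\le\mathcal{O}(1/(\theta_r\theta_m^2L_f^2 n\tau))$, which keeps the coefficient of $\|\nabla f(x^k)\|^2$ negative and leaves only the stated projection-variance term as the surviving error. Telescoping over $k=0,\dots,K-1$, using $f(x^0)-f(x^K)\le f(x^0)-f^\star$, and dividing by $K$ then delivers the claimed $\mathcal{O}\!\big(\tfrac{\theta_m}{K\tau\eta}+\eta\tau\theta_r^2\theta_m^2(\theta_m-1)\tfrac1n B_g^2\big)$ bound, with the full-space guarantee recovered at $P^k=I$, i.e.\ $\theta_m=\theta_r=1$, where the second term vanishes.
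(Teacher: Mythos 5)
Your proposal follows essentially the same route as the paper's proof: eliminating the dual variables to obtain the mean server dynamics $x^{k+1}=x^k-\eta P^k\sum_t \bar{g}(B_i^{k,t})$ (the paper's Appendix B), applying the smoothness descent lemma with the unbiasedness of $P^k(P^k)^T$ to extract the $-\eta\tau/\theta_m$ descent, bounding the local drift by unrolling the $B_i^{k,t}$ recursion (the paper's Lemma 1), invoking Assumption 3 to control the projection-variance term $(\theta_m-1)B_g^2$ generated by the cross-round correction (Lemma 2), and absorbing all higher-order errors via the step-size condition before telescoping (Lemma 3). The only presentational difference is that the paper formalizes the cross-round coupling you describe as ``simultaneous tracking'' through an explicit Lyapunov function $\Omega^k = f(x^k)-f^{\star}+\frac{1}{n^2}\sum_{i=1}^n\|Y_i^k-\overline{Y}^k\|^2$, which is the natural way to complete your sketch.
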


The proof is given in Appendix \ref{app-proof-thm}.  As shown in Theorem~\ref{thm}, we have the following observations:
\textbf{i)} The error consists of two parts: the first corresponds to sublinear convergence and  the second arises from the variance of $P^k(P^k)^T$.  Note that since
$ \theta_r = \max_l \{ \frac{r_l}{m_l} \}$ and $\theta_m = \max_l\{ \frac{m_l}{r_l} \}$, we can take $\theta_r \theta_m = \mathcal{O}( 1)$. 
\textbf{ii)} When $P^k = I$, we have $\theta_m - 1=0$, and the error simplifies to 
$\mathcal{O}( \frac{1}{K\tau \eta})$. 
\textbf{iii)} When $\tau = 1$, the result reduces to 
$\mathcal{O}( \frac{1}{K\eta } )$, 
which recovers the standard convergence rate of single-machine  gradient descent \cite{reddi2016stochastic} for nonconvex problems.

\begin{figure*}[t!]
  \centering
  \begin{subfigure}[b]{0.32\textwidth}
    \includegraphics[width=\textwidth]{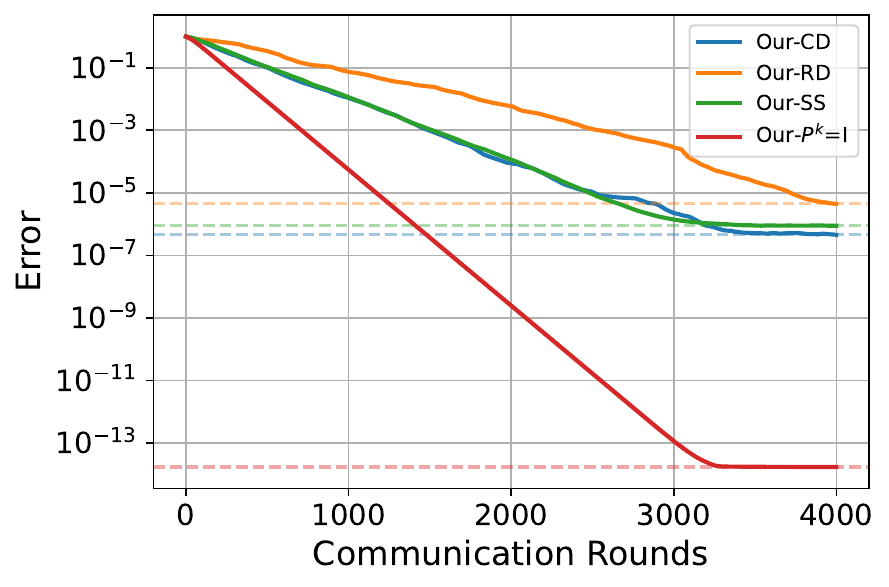}
    \caption{Impact of $P^k$}
    \label{fig:log1}
  \end{subfigure}
  \hfill
  \begin{subfigure}[b]{0.32\textwidth}
    \includegraphics[width=\textwidth]{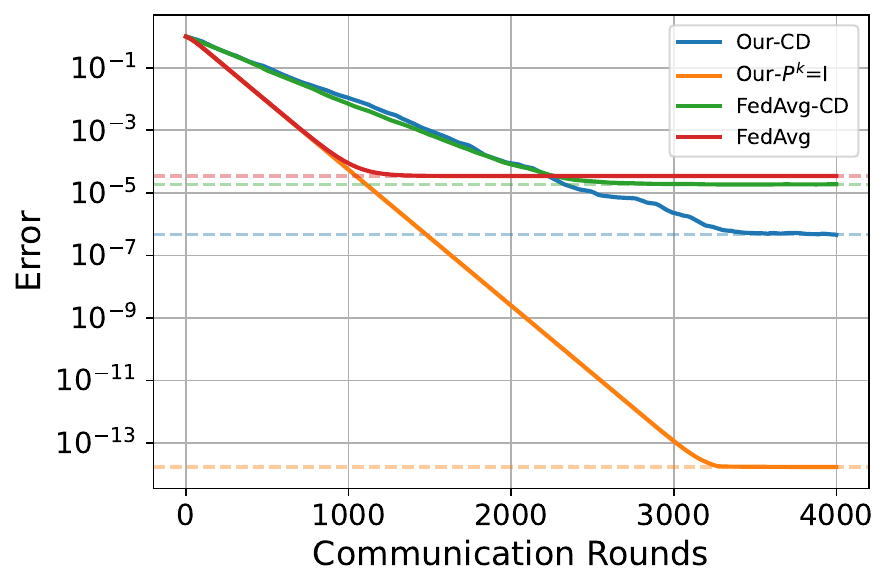}
    \caption{Comparison with baselines}
    \label{fig:log2}
  \end{subfigure}
  \hfill
  \begin{subfigure}[b]{0.32\textwidth}
    \includegraphics[width=\textwidth]{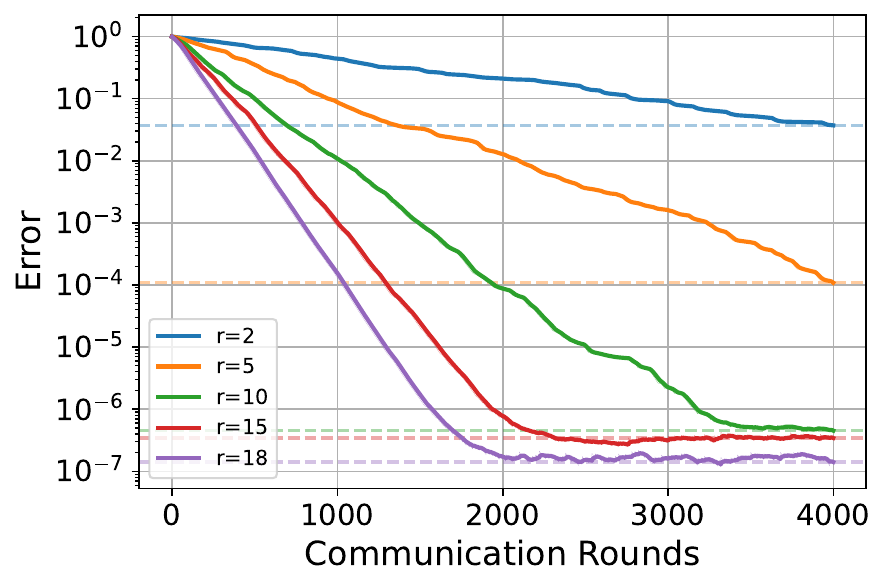}
    \caption{Impact of subspace dimensions $r$}
    \label{fig:log3}
  \end{subfigure}

  \caption{Logistic regression:  
 (a) our FedSub with different projection matrices $P^k$.  (b) Compare our FedSub with FedAvg in full space and subspace.  (c) our FedSub with different subspace dimensions $r$.}
  \label{fig:linear}
\end{figure*}

\vspace{-1mm}
\section{Numerical Experiments}
\vspace{-1mm}
There exist multiple approaches to generate a projection matrix \(P^k \in \mathbb{R}^{m \times r}\) that exactly or approximately satisfies Assumption~\ref{asm-p}~\cite{chen2025memory, kozak2023zeroth}. In our experiments, we consider three commonly used examples: Coordinate Descent (CD), Random Generation (RD), and Spherical Smoothing (SS) \cite{he2024subspace,chen2025memory}. 

\vspace{-1mm}
\subsection{Logistic Regression }\label{sec-experiment-lr}
\vspace{-1mm}
We generate \(60{,}000\) samples with \(m=20\) dimensions, partitioned into 30 clusters, each centered on a distinct hyperplane with small noise to induce heterogeneity. We set \(n=30\) clients, each owning one cluster. The optimal solution \(x^{\star}\) is precomputed, and convergence is measured by \(\text{Error} = \frac{\|x^k - x^{\star}\|}{\|x^{\star}\|}\). We use local full gradients for all logistic regression experiments. We set $\eta=0.2$, $r=10$ and $\tau=5$.



\noindent{\bf Impacts of Subspace Projection Matrix $P^k$.} 
We construct \( P^k\in \mathbb{R}^{m\times r} \) in Algorithm~\ref{alg-fl} using three different methods: CD, SS, and RD, resulting in the algorithms our-CD, our-SS, and our-RD, respectively.   
As a baseline, we also consider the case where \( P^k = I \), denoted as {our-\( P^k = I \)}. 
%

As shown in Fig.~\ref{fig:log1},  Our-\( P^k = I \), which runs in the full-dimensional space, achieves exact convergence. For our-CD, our-SS, and our-RD, the use of subspace projection compromises accuracy. This result aligns with our Theorem \ref{thm}: since \( \theta_m - 1 \neq 0 \), the projection introduces an error. Among the three subspace projection methods, our-CD demonstrates the best overall performance. Therefore, we adopt CD as the default projection strategy in subsequent experiments.


\noindent{\bf Comparison with Baselines.} 
Next, we compare Algorithm~\ref{alg-fl} with several baseline methods to demonstrate the role of dual variables in correcting client drift. To this end, we evaluate four algorithms: our-CD, our-\(P^k = I\), FedAvg, and FedAvg-CD. Here, FedAvg-CD refers to a variant of Algorithm~\ref{alg-fl} in which all dual variables are set to zero (\(\Lambda_i^k = 0\) for all \(k\)) and the projection matrix \(P^k\) is constructed using the CD method. 

As shown in Fig.~\ref{fig:log2}, due to the use of dual variables for correcting client drift, our-\(P^k = I\) significantly outperforms FedAvg in terms of accuracy. Both our-CD and FedAvg-CD employ subspace projections, but our-CD, with dual variable correction, achieves approximately \(10^{-7}\), whereas FedAvg-CD converges only to approximately \(10^{-5}\). FedAvg-CD achieves slightly better accuracy than FedAvg, possibly because restricting local updates to a subspace reduces inter-client update divergence, thereby mitigating client drift.
These results demonstrate that, whether in full-space or subspace settings, incorporating dual variables effectively mitigates client drift and thus improves convergence accuracy.

\noindent{\bf Impacts of Number of Subspace Dimensions $r$.} 
Finally, we investigate the performance of the our-CD algorithm under different subspace dimensions $r$. As shown in Fig. \ref{fig:log3}, the accuracy of FedSub improves with increasing 
$r$, which aligns with Theorem \ref{thm}.

 \begin{figure}[h]
  \centering
 \includegraphics[width=7cm]{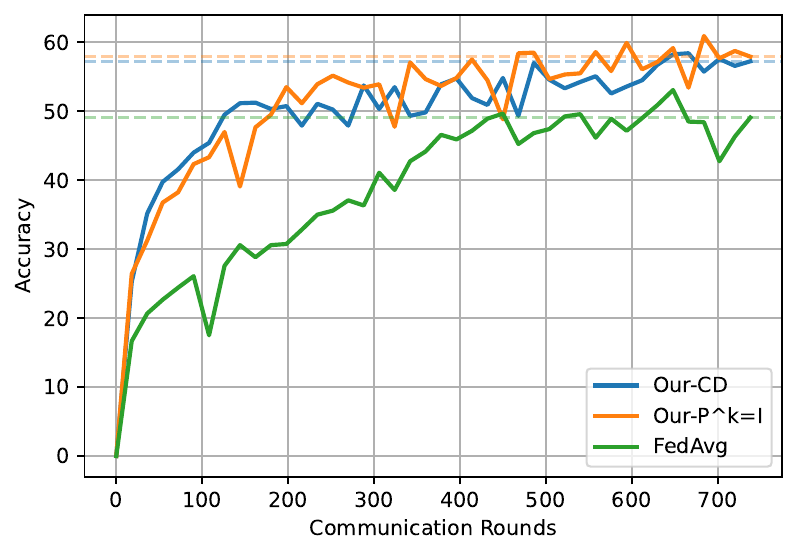}
  \caption{CIFAR-100 classification with ResNet: test accuracy.}
      \label{fig:resnet}
\end{figure}

\vspace{-1mm}
\subsection{ CIFAR-100 Classification with ResNet}
\vspace{-1mm}
We evaluate FedSub on CIFAR-100 \cite{he2016deep}, simulating heterogeneity by assigning different categories to different clients. A 110-layer ResNet \cite{he2016deep} is used, applying FedSub to all convolutional layers, reducing trainable parameters to 43\% of the original. All algorithms are trained for 40 epochs, and Top-1 accuracy is reported. We set $\eta=0.1$, $\tau=10$, $r=3$, and use local stochastic gradients with a batch size of $32$.

As shown in Fig.~\ref{fig:resnet},  both Our-CD and Our-$P^k =I$ outperform FedAvg, achieving nearly 58\% on the test set. This demonstrates that Our-CD, despite reducing communication, computation, and memory costs through subspace projections as we show in Table \ref{tab}, can still maintain competitive accuracy on the CIFAR-100 classification task.  

\vspace{-1mm}
\section{Conclusions}
\vspace{-1mm}
We propose an efficient subspace algorithm for FL, addressing the challenges posed by data heterogeneity and large-scale model training. By making local updates of each client within low-dimensional subspaces,  FedSub reduces communication, computation, and memory costs, while uses dual variable to mitigate client drift. Theoretical analysis and experimental results further validate its efficiency. Future work includes extending FedSub to adaptive optimizers like Adam and incorporating error compensation to reduce bias from subspace projections.

\newpage
\bibliographystyle{IEEEbib}
\bibliography{strings,refs}

\newpage
\section{Appendix}\label{sec-app}
\subsection{Derivation of Algorithm \ref{alg-fl}} \label{sec-derive} \label{app-derive}
For a more compact representation, we define  $\bx = \{x_i\}_{i=1}^n$ to aggregate all local weights and introduce a linear operator $\mathcal{L}$ to encode the consensus constraint as:

\begin{equation}\label{eq-def-L}
(\mathcal{L} \bx)_i := x_i - \frac{1}{n} \sum_{i=1}^n x_i, \quad \forall i \in [n].
\end{equation}
Using \eqref{eq-def-L} and expressing the objective in \eqref{eqn:opt_consensus} as $f(\bx)$, we rewrite \eqref{eqn:opt_consensus} in a compact form:
\begin{equation}\label{eqn:opt_consensus_compact}
\begin{aligned}
\min_{\bx}~ f(\bx), \quad \mbox{s.t.}~ \mathcal{L} \bx=0.
\end{aligned}
\end{equation}

To reveal the theoretical foundation of our FedSub, we derive it in two stages. First, we apply the primal-dual method to problem~\eqref{eqn:opt_consensus_compact}, which leads to a full-space primal-dual algorithm. This is then modified into a subspace variant. 

The primal-dual formulation of~\eqref{eqn:opt_consensus_compact} is given by
\begin{equation}\label{eq-saddle-point}
\min_{\bx} \max_{\by } f(\bx) + \langle \by, \mathcal{L} \bx \rangle,
\end{equation}
where \(\by=\{y_i\}_{i=1}^n \) with $y_i=\{y_{i,l}\}_{l=1}^L$ is the dual variable and $\langle \by, \mathcal{L}\bx\rangle=\sum_{i=1}^n y_i^T (\mathcal{L}\bx)_i$. To solve \eqref{eq-saddle-point}, we adopt an alternating approach: computing an approximate primal solution with a fixed dual variable, followed by a gradient ascent step on the dual variable while keeping the primal variable fixed.
To improve communication efficiency, all clients perform \(\tau\) local updates before communicating with the server. This leads to the following primal-dual algorithm
\begin{equation}\label{eq-pd-local}
\begin{aligned}
&\bz^{k,t+1}=\bz^{k,t}-\eta \nbf(\bz^{k,t})-\frac{1}{\tau}{\mathcal{L} \by^k}, \\		
&\by^{k+1}=\by^k+\mathcal{L}  \bz^{k,\tau},\\
&\obx^{k+1}=\bz^{k,\tau}-\mathcal{L}\bz^{k,\tau},
\end{aligned}
\end{equation}
where $\bz=\{z_i\}_{i=1}^n$ with $z_i=\{z_{i,l}\}_{l=1}^L$, $\nbf(\bz^{k,t})=\{ \nabla f(z_i^{k,t})\}_{i=1}^n$, $\bz^{k,0}=\bx^k$, and $\by^0=0$. Given the definition of $\mathcal{L}$, each block of $\bx$ is identical. Therefore, we denote each block of $\bx$ by $x$.
When \(\tau = 1\),  \eqref{eq-pd-local} reduces to PDHG \cite{alghunaim2022unified}. 

Next, we modify \eqref{eq-pd-local} to the subspace method. 
Define $P^kB_i^{k,t}=z_i^{k,t}-x^k$, then \eqref{eq-pd-local} yields 
\begin{align}\label{eq-zz}
&P^kB_i^{k,t+1}=P^kB_i^{k,t}-\eta \left(\nabla f_i(x^k+ P^kB_i^{k,t})+\frac{y_i^k}{\eta\tau}  \right), \notag \\ \notag
&y_i^{k+1}=y_i^k+P^k\left(B_i^{k,\tau}-\frac{1}{n} \sum_{i=1}^n B_i^{k,\tau}\right),\\
&x^{k+1}=x^k+P^k\frac{1}{n}\sum_{i=1}^n B_i^{k,\tau},
\end{align}    
where we substitute $z_i^{k,t}=x^k+P^kB_i^{k,t}$ in the three equalities in \eqref{eq-zz}. In addition, we use $\mathcal{L} \by^k =\by^k$  in the first equality and use  $\mathcal{L}\bx^k=0$ in the second and the last equalities. 

With $(P^k)^T P^k=\frac{m}{r}I$, multiplying $(P^k)^T$ on both sides of the above updates yields
\begin{align}\label{eq-pd-11}
&\frac{m}{r} B_i^{k,t+1}=\frac{m}{r} B_i^{k,t}-\eta (P^k)^T\left(  \nabla f_i(x^k+ P^kB_i^{k,t})+\frac{y_i^k}{\eta\tau}  \right),\notag\\ \notag
&(P^k)^T y_i^{k+1}=(P^k)^T y_i^k+\frac{m}{r}\left(B_i^{k,\tau}-\frac{1}{n} \sum_{i=1}^n B_i^{k,\tau}\right),\\
&x^{k+1}=x^k+P^k\frac{1}{n}\sum_{i=1}^n B_i^{k,\tau}.
\end{align}    
For the second update in \eqref{eq-pd-11}, it holds that
\begin{align}\label{eq-p-8}
&(P^{k+1})^T P^k (P^k)^T y_i^{k+1}\\\notag
&=(P^{k+1})^T P^k\left( (P^k)^T y_i^k+\frac{m}{r}\left(B_i^{k,\tau}-\frac{1}{n} \sum_{i=1}^n B_i^{k,\tau}\right) \right).    
\end{align}   
Since $\bE[P^k (P^k)^T]=I$, in our FedSub, we propose to drop the term $P^k (P^k)^T$ on the left hand of \eqref{eq-p-8} and let $\frac{r}{m}(P^k)^Ty_i^k$ as $\Lambda_i^k=$, then from \eqref{eq-pd-11} we get 
\begin{align}\label{our-derive}
&B_i^{k,t+1}= B_i^{k,t}- \eta \left( \frac{r}{m} (P^k)^T \nabla f_i(x^k+ P^kB_i^{k,t})+\frac{\Lambda_i^k}{\eta\tau}  \right), \notag \\\notag
&\Lambda_i^{k+1}= (P^{k+1})^T P^k \left(\Lambda_i^k+ B_i^{k,\tau}-\frac{1}{n} \sum_{i=1}^n B_i^{k,\tau}\right),\\
&x^{k+1}=x^k+P^k\frac{1}{n}\sum_{i=1}^n B_i^{k,\tau}.    
\end{align}    
We complete the derivation of the proposed Algorithm \ref{alg-fl}. 

\subsection{Dual Variables to Mitigate Client Drift} \label{sec-simplify}

In this section, we explain that the dual variables correct client drift, by equivalently reformulating Algorithm~\ref{alg-fl}, as follows.

In \eqref{our-derive}, for the local updates, we have
\begin{equation}\label{eq-b20}
\begin{aligned}
B_i^{k,t+1} &=  B_i^{k,t} -\eta \left({g_i(B_i^{k,t})}+\frac{1}{\eta \tau} \Lambda_i^k \right), \\
B_i^{k,\tau} & =  -\eta \left( \sum_{t=0}^{\tau-1} {g_i(B_i^{k,t})}+\frac{1}{\eta} \Lambda_i^k \right),
\end{aligned}    
\end{equation}
where we define ${g_i(B_i^{k,t})}:={\frac{r}{m}} (P^k)^T\nabla f_i(x^k+P^k B_i^{k,t})$ for simplicity and use  $B_i^{k,0}=0$ in the second update.

In \eqref{our-derive}, recall that for the dual variable we have 
\begin{equation}\label{eq-b21}
\Lambda_i^{k+1}=(P^{k+1})^T P^k\left(\Lambda_i^k + B_i^{k,\tau} -\frac{1}{n}\sum_{i=1}^n B_i^{k,\tau}\right). 
\end{equation}
Due to the initialization $\Lambda_i^0=0$ and the fact that $P^k$ is the same across clients,  \eqref{eq-b21} implies that 
\begin{equation}\label{eq-b22}
 \frac{1}{n} \sum_{i=1}^n \Lambda_i^k =0, ~\forall k.   
\end{equation}
Combining \eqref{eq-b22} and \eqref{eq-b20}, \eqref{eq-b21} yields
\begin{align}\label{eq-b23}
\Lambda_i^{k+1}&=(P^{k+1})^T P^k\left(\Lambda_i^k-\eta  \left(\sum_{t=0}^{\tau-1}{g_i(B_i^{k,t})} +\frac{1}{\eta}\Lambda_i^k\right) \right. \notag\\\notag
&\left.  + \eta\frac{1}{n}\sum_{i=1}^n \left(\sum_{t=0}^{\tau-1}{g_i(B_i^{k,t})} +\frac{1}{\eta}\Lambda_i^k\right)\right) \\
&= (P^{k+1})^T P^k\eta \sum_{t=0}^{\tau-1} \left( \bar{g}(B_i^{k,t}) - g_i(B_i^{k,t})\right),
\end{align}    
where $\bar{g}(B_i^{k,t})  := \frac{1}{n}\sum_{i=1}^n g_i(B_i^{k,t})$. Thus, the dual variable is essentially equal to the difference between the average gradient and the local gradient.

Substituting \eqref{eq-b23} into \eqref{eq-b20}  yields 
\begin{align}\label{eq-def-c}
B_i^{k,t+1}&= B_i^{k,t} -\eta \Big({g_i(B_i^{k,t})}\\ \notag
& +\underbrace{ {(P^k)^T P^{k-1}} \frac{1}{\tau}\sum_{t=0}^{\tau-1} \left(\bar{g}(B_i^{k-1,t}) - g_i(B_i^{k-1,t}) \right)}_{:=c_i^k}\Big).  
\end{align}

In \eqref{our-derive}, for the server side, the global model is updated as  
\begin{equation}\label{eq-g-16}
\begin{aligned}
x^{k+1}= x^k+P^k \frac{1}{n}\sum_{i=1}^n B_i^{k,\tau}= x^k-\eta P^k \sum_{t=0}^{\tau-1} {\bar{g}(B_i^{k,t})},
\end{aligned}    
\end{equation}
where we substitute \eqref{eq-b20} and  \eqref{eq-b22} in the second equality.

To summarize, from \eqref{eq-b20} and \eqref{eq-g-16}, after eliminating the dual variable,  Algorithm \ref{alg-fl} is mathematically equivalent to 
\begin{equation}\label{eq-28}
\begin{aligned}
B_i^{k,t+1}&= B_i^{k,t} -\eta \left({g_i(B_i^{k,t})}+c_i^k\right),\\
x^{k+1}&=x^k-\eta P^k \sum_{t=0}^{\tau-1} {\bar{g}(B_i^{k,t})}. 
\end{aligned}    
\end{equation}
The term $c_i^k$ is a correction for client drift, and from \eqref{eq-b23} we know that $c_i^k$ is essentially the scaled dual variable $\frac{1}{\eta\tau }\Lambda_i^k$.

In the following analysis, we will analyze \eqref{eq-28} to establish the convergence of our Algorithm \ref{alg-fl}.  

\subsection{Efficiency Analysis of Algorithm \ref{alg-fl}} \label{app-eff-analysis}

{\bf Communication Efficiency.} 
In Line 8 of Algorithm \ref{alg-fl},  client \( i \) sends the low-dimensional subspace variable \( B_{i}^{k,\tau} \in \mathbb{R}^{r \times d} \) to the server. Compared to the full-space setting, where each client would transmit a local update in \( \mathbb{R}^{m \times d} \), this reduces the uplink communication cost from \( \mathcal{O}\left(  md \right) \) to \( \mathcal{O}\left(  r d \right) \).

In Line 10, the server broadcasts the full-space global model \( x^{k+1} \in \mathbb{R}^{m \times d} \) to all clients. Since server bandwidth is typically sufficient, this does not cause a communication bottleneck in practice.


\noindent{\bf Computation Efficiency.}  
In terms of computation complexity, we consider only gradient evaluations and matrix multiplications, while ignoring additions and scalar multiplications, because the latter are typically much less computationally expensive in practice. 
In Line 6, the product \( P^k B_i^{k,t} \) costs \( mrd \). The computation cost of computing the subspace gradient $(P^k)^T \nabla f_i$ ({without explicitly computing $\nabla f_i$}) is denoted as $C_{g}(rd)$. In Line 12, matrix multiplication \( P^k (\cdot) \) costs \( mrd \). Multiplying the result by \( (P^{k+1})^T \)  costs \( mrd \).  
Therefore, the total computation complexity of our FedSub is $ \mathcal{O}( \tau( mrd+C_{g}(rd) )+ 2mrd) $. 

When $P^k = I$, the total computation cost reduces to that of evaluating the full-space gradient for $\tau$ times, which is $\mathcal{O}(\tau C_g(md))$. Similarly, for FedAvg, the total computation complexity is $\mathcal{O}(\tau C_g(md))$.

\noindent{\bf Memory Efficiency.} We consider the memory cost per communication round of Algorithm \ref{alg-fl}, and compare with the full-space version when $P^k=I$, and FedAvg that does not use subspace and dual variable.

In Line 6, due to the subspace projection, it suffices to compute the gradient of $f_i(x^k + P^k B_i)$ with respect to the variable $B_i$. In other words, we can directly compute the overall quantity $(P^k)^T \nabla f_i(x^k + P^k B_i^{k,t})$ without explicitly evaluating $\nabla f_i(x^k + P^k B_i^{k,t})$ itself.  
Thus, in Line 6, our FedSub needs to store 
$$B_i^{k,t}: rd,~ P^k: rm,~ (P^k)^T\nabla f_i: M_g(rd),  ~x^k: md, ~\Lambda_i^k: rd.  $$
Here, $M_g(rd)$ denotes the memory cost of computing a gradient of dimension $r\times d$, which is typically higher than that of storing an $r \times d$ matrix in deep learning, due to additional memory required for intermediate quantities such as activations during backpropagation.

In Line 12, our FedSub needs to further store 
$$P^{k+1}: rm,~ \frac{1}{n}\sum_{i=1}^n B_i^{k,\tau}: rd.$$
The total memory cost for our FedSub is $\mathcal{O}( 3rd+M_g(rd)+2rm+md)$. 

By contrast, when $P^k=I$, we can update the combined variable $B_i^{k,t} + x^k$ directly, without storing $B_i^{k,t}$ and $x^k$ separately. Thus, our FedSub with $P^k=I$ needs to store 
$$B_i^{k,t} + x^k: md, ~\nabla f_i: M_g(md),  ~\Lambda_i^k: md, ~\frac{1}{n}\sum_{i=1}^n B_i^{k,\tau}: md.$$
The total memory cost for our FedSub with $P^k=I$ is $\mathcal{O}( 3md+M_g(md))$. Similarly, we can get that the total memory cost for FedAvg is $\mathcal{O}(md+M_g(md))$.

\subsection{Proof of Theorem \ref{thm}}\label{app-proof-thm}
We start with the following inequality that will be used frequently:  
\begin{equation}\label{eq-18-1}
\begin{aligned}
\|P^T \nabla f(x+PB) \|^2 &\le \sum_{l=1}^L \|P_l\|_F^2 \|\nabla_l f(x+PB) \|_F^2\\
 &\le \theta_m \| \nabla f(x+PB)\|^2.      
\end{aligned}    
\end{equation} 

\begin{lemma}\label{lem-B}
Under Assumptions \ref{asm-p}--\ref{asm-smooth}, and with the step size $\eta$ chosen as in \eqref{eq-stepsize-1}, we have
\begin{equation}\label{eq-b27}
\begin{aligned}
&\sum_{i=1}^n\sum_{t=0}^{\tau-1} \bE\|B_i^{k,t}\|^2\le  5 \tau \sum_{i=1}^n \bE\| Y_i^k -\overline{Y}^k \|^2\\ & +5 \eta^2 {\theta_r^2} \tau^3 n \bE\|(P^k)^T\nabla f(x^k) \|^2,
\end{aligned}    
\end{equation} 
where $\overline{Y}^k=\frac{1}{n}\sum_{i=1}^n Y_i^k$ and 
\begin{equation}\label{eq-def-Y}
\begin{aligned}
Y_i^k &:= \eta \tau\Big( {{\frac{r}{m}}(P^k)^T\nabla f_i(x^k)} \\ 
& +(P^k)^T P^{k-1} \frac{1}{\tau}\sum_{t=0}^{\tau-1} (\bar{g}(B_i^{k-1,t}) -g_i(B_i^{k-1,t}) ) \Big).    
\end{aligned}    
\end{equation}

\end{lemma}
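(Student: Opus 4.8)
The plan is to unroll the inner recursion for $B_i^{k,t}$ from the initialization $B_i^{k,0}=0$ and to isolate the quantity $Y_i^k$ from \eqref{eq-def-Y}. Using the equivalent form \eqref{eq-28} and the fact that the correction $c_i^k$ is constant across the inner iterations $t$, summing the recursion gives $B_i^{k,t} = -\eta t\, c_i^k - \eta\sum_{s=0}^{t-1} g_i(B_i^{k,s})$. Adding and subtracting $g_i(0)=\frac{r}{m}(P^k)^T\nabla f_i(x^k)$ inside the sum and recognizing that $\eta\tau\big(g_i(0)+c_i^k\big)=Y_i^k$, I would rewrite this as
\begin{equation*}
B_i^{k,t} = -\tfrac{t}{\tau} Y_i^k - \eta\sum_{s=0}^{t-1}\big(g_i(B_i^{k,s})-g_i(0)\big).
\end{equation*}

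The next step is to center $Y_i^k$ around its client average. Since $P^{k-1}$ is shared across clients and $\frac{1}{n}\sum_i(\bar g(B_i^{k-1,t})- g_i(B_i^{k-1,t}))=0$, the averaged correction vanishes, so $\overline{Y}^k = \eta\tau\,\frac{r}{m}(P^k)^T\nabla f(x^k)$. Writing $Y_i^k=(Y_i^k-\overline{Y}^k)+\overline{Y}^k$ then produces a three-term decomposition of $B_i^{k,t}$ into a consensus term $\frac{t}{\tau}(Y_i^k-\overline{Y}^k)$, a global-gradient term $\eta t\,\frac{r}{m}(P^k)^T\nabla f(x^k)$, and a linearization-error term $\eta\sum_{s<t}(g_i(B_i^{k,s})-g_i(0))$. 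Applying $\|a+b+c\|^2\le 3(\|a\|^2+\|b\|^2+\|c\|^2)$ treats them separately.

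The two key estimates come from the projection facts and smoothness. From $(P_l^k)^TP_l^k=\frac{m_l}{r_l}I$ one reads off $\|P_l^k\|_{\mathrm{op}}^2=\frac{m_l}{r_l}$, which combined with Assumption~\ref{asm-smooth} yields the per-iterate bound $\|g_i(B_i^{k,s})-g_i(0)\|^2\le \theta_r\theta_m L_f^2\|B_i^{k,s}\|^2$, and the layerwise scaling $\frac{r}{m}$ in the global-gradient term contributes the factor $\theta_r^2$ (this is the same bookkeeping as in \eqref{eq-18-1}). Summing over $t$ and using $\sum_{t=0}^{\tau-1}t^2\le\tau^3$, $\frac{1}{\tau^2}\sum_t t^2\le\tau$, and $\sum_{t}t\sum_{s<t}\|B_i^{k,s}\|^2\le\tau^2\sum_s\|B_i^{k,s}\|^2$ gives a self-referential inequality of the shape $\sum_t\|B_i^{k,t}\|^2\le 3\tau\|Y_i^k-\overline{Y}^k\|^2+3\eta^2\theta_r^2\tau^3\|(P^k)^T\nabla f(x^k)\|^2+c\sum_s\|B_i^{k,s}\|^2$ with $c=3\eta^2\theta_r\theta_m L_f^2\tau^2$.

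I expect the absorption of the self-referential term to be the main delicate point: the step-size choice \eqref{eq-stepsize-1} must be calibrated so that $c\le\frac{2}{5}$, giving $\frac{1}{1-c}\le\frac{5}{3}$ and hence the clean constant $5$ appearing in \eqref{eq-b27}. Summing over $i$ and using that $\|(P^k)^T\nabla f(x^k)\|^2$ is independent of $i$ (so its sum over clients equals $n$ times itself), then taking expectations, delivers the stated bound. The subtleties to watch throughout are the layerwise tracking of $\theta_r$ and $\theta_m$ and making sure the numerical constants propagate through the Young/Jensen steps so that the final factor lands exactly at $5$.
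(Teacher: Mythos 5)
Your proposal is correct and follows essentially the same route as the paper's proof: unroll the inner recursion from $B_i^{k,0}=0$, recenter around $Y_i^k$ and $\overline{Y}^k=\eta\tau\frac{r}{m}(P^k)^T\nabla f(x^k)$, bound the linearization error via $L_f$-smoothness and the projection bounds, and absorb the self-referential term using the step-size condition (your requirement $c=3\eta^2\theta_r\theta_m L_f^2\tau^2\le\frac{2}{5}$ is indeed implied by \eqref{eq-stepsize-1} since $\theta_r\theta_m^2\ge1$). The only differences are bookkeeping: you use a direct three-way split with one-shot absorption $\frac{1}{1-c}\le\frac{5}{3}$ and a slightly tighter smoothness constant $\theta_r\theta_m$, whereas the paper uses nested two-way splits (constants $4,4,2$), the coefficient $\theta_r^2\theta_m^3$, and a recursion on partial sums $S_i^{k,t}$ with the geometric bound $\sum_t(1+1/(8\tau))^t\le1.15\tau$ — both land on the constant $5$.
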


\begin{proof}
Repeated applications of the first update in \eqref{eq-28} yields
\begin{equation}\label{eq-b28}
B_i^{k,t+1}=- \eta \sum_{s=0}^t \left( g_i(B_i^{k,s}) + c_i^k \right),
\end{equation}
where we use $B_i^{k,0}=0$. Then adding and subtracting ${\frac{r}{m}}\eta(t+1)(P^k)^T\nabla f(x^k)$ on the  right hand  of  \eqref{eq-b28}, and taking the norm on both sides of the result gives 
\begin{align}\label{eq-32}
&\|B_i^{k,t+1}\|^2 \notag \\
&\le 2 \underbrace{\| \eta \sum_{s=0}^t \left( g_i(B_i^{k,s}) + c_i^k \right) -{\frac{r}{m}}\eta(t+1)(P^k)^T\nabla f(x^k)\|^2}_{\rm (I)} \notag\\
&+2 {\|{\frac{r}{m}}\eta(t+1)(P^k)^T\nabla f(x^k) \|^2}.  
\end{align}  
For the term (I), we express it with $Y_i^k$ defined in \eqref{eq-def-Y}  as 
\begin{align}\label{eq-33}
&{\rm(I)}=\| \eta \sum_{s=0}^t \left( g_i(B_i^{k,s}) + c_i^k -{\frac{r}{m}} (P^k)^T \nabla f(x^k) \right)\|^2 \notag \\
&=\| \eta \!\sum_{s=0}^t \!\left( g_i(B_i^{k,s})\! \pm\! \frac{r}{m}(P^k)^T\nabla f_i(x^k) \!+\! c_i^k - \frac{1}{\eta\tau}\overline{Y}^k\right)\|^2\notag\\
&=\Big\| \eta \sum_{s=0}^t \Big( {g_i(B_i^{k,s})} -{\frac{r}{m}}(P^k)^T\nabla f_i(x^k)+\frac{1}{\eta\tau}(Y_i^k-\overline{Y}^k)\Big)\Big\|^2\notag\\
&\le 2 \| \eta \sum_{s=0}^t \left({g_i(B_i^{k,s})} {-{\frac{r}{m}}(P^k)^T\nabla f_i(x^k)}\right) \|^2\notag\\
& + 2\|\frac{t+1}{\tau}Y_i^k -\frac{t+1}{\tau}\overline{Y}^k \|^2,
\end{align}    
where the notation $\pm a$ denotes the operation of adding $a$ and then subtracting $a$, and we use $\overline{Y}^k= \eta \tau\ \frac{r}{m}(P^k)^T\nabla f(x^k)$ in the second equality and substitute $c_i^k$ defined in \eqref{eq-def-c} in the third equality.  
%
Substituting \eqref{eq-33} into \eqref{eq-32}, we get
\begin{equation}\label{eq-34}
\begin{aligned}
&\|B_i^{k,t+1}\|^2
\le 4 \underbrace{\| \eta \sum_{s=0}^t \left({g_i(B_i^{k,s})} {-{\frac{r}{m}}(P^k)^T\nabla f_i(x^k)} \right)\|^2}_{\rm(II)}\\
&+ 4\|\frac{t+1}{\tau}(Y_i^k -\overline{Y}^k) \|^2 +2\|{\frac{r}{m}}\eta(t+1)(P^k)^T\nabla f(x^k) \|^2.
\end{aligned}    
\end{equation}
To get recursion on $\|B_i^{k,t+1}\|^2$, we use $L_f$-smootheness to bound (II), which gives 
\begin{align}\label{eq-b33}
{\rm(II)}
&\le \eta^2 {\theta_r^2} \| (P^k)^T\sum_{s=0}^t \left(\nabla f_i(x^k+P^k B_i^{k,s}) - {\nabla f_i(x^k)}\right) \|^2\notag\\
&\le \eta^2{\theta_r^2} \theta_m \|\sum_{s=0}^t \left(\nabla f_i(x^k+P^k B_i^{k,s}) -\nabla f_i(x^k) \right)\|^2\notag\\
&\le \! \eta^2 {\theta_r^2}\theta_m (t\!+\!1) L_f^2 \sum_{s=0}^{t} \| P^k B_i^{k,s} \|^2\notag\\
&\le \eta^2 {\theta_r^2}\theta_m^3 (t+1)  L_f^2  \sum_{s=0}^{t} \| B_i^{k,s} \|^2. 
\end{align}
where we use $\theta_r=\frac{r}{m}$ in the first inequality.

Thus, combining \eqref{eq-b33} with \eqref{eq-34}, we obtain
\begin{equation}\label{eq-36}
\begin{aligned}
&\bE\|B_i^{k,t+1}\|^2 \\
&\le  4\eta^2 {\theta_r^2}\theta_m^3 (t+1)  L_f^2 \sum_{s=0}^{t} \bE\| B_i^{k,s} \|^2  \\
&+ \underbrace{\!4\bE\|Y_i^k\! -\!\overline{Y}^k \|^2 +2\eta^2{\theta_r^2} \tau^2\bE\|(P^k)^T\nabla f(x^k) \|^2}_{:=A^k}.
\end{aligned}    
\end{equation}
Let us define $S_i^{k,t}:=\sum_{s=0}^t \bE\|B_i^{k,s}\|^2 $, then we have 
\begin{equation}\label{eq-b43}
\begin{aligned}
\bE\|B_i^{k,t+1}\|^2 = S_i^{k,t+1}-S_i^{k,t}.
\end{aligned}    
\end{equation}
Substituting \eqref{eq-b43} to \eqref{eq-36}, we have
\begin{equation}\label{eq-b36}
\begin{aligned}
S_i^{k,t+1}-S_i^{k,t}\le 4\eta^2 {\theta_r^2}\theta_m^3 (t+1)  L_f^2 S_i^{k,t} + A^k. 
\end{aligned}    
\end{equation}
Let us choose a small $\eta$ such that
\begin{equation}\label{eq-stepsize-1}
4\eta^2 {\theta_r^2}\theta_m^3 (t+1)  L_f^2 \le \frac{1}{8\tau },   
\end{equation}
then \eqref{eq-b36} yields  
\begin{equation}\label{eq-41}
\begin{aligned}
S_i^{k,t+1} &\le \left(1+{1}/{(8\tau) }\right) S_i^{k,t} + A^k. 
\end{aligned}    
\end{equation}
By repeated using \eqref{eq-41}, we conclude that 
\begin{equation}\label{drift-err-i}
\begin{aligned}
S^{k,\tau-1}_{i}
\leq A^k \sum_{t=0}^{\tau-2}\left(1+{1}/{(8\tau)}\right)^{t}
\le 1.15 \tau A^k,
\end{aligned}
\end{equation}
where we use $\sum_{l=0}^{\tau-2}\left(1+1/{(8\tau)}\right)^l \leq\sum_{l=0}^{\tau-2} \exp \left({ l}/{(8\tau)}\right) \leq\sum_{l=0}^{\tau-2} \exp (1/8) \le 1.15\tau$.
Substituting $A^k$ into \eqref{drift-err-i} yields
\begin{equation*}
\begin{aligned}
\sum_{t=0}^{\tau-1} \bE\|B_i^{k,t}\|^2 \le 5 \tau \bE \| Y_i^k -\overline{Y}^k \|^2 +5 \eta^2{\theta_r^2} \tau^3 \bE\|(P^k)^T\nabla f(x^k) \|^2,   
\end{aligned}    
\end{equation*} 
which gives Lemma \ref{lem-B}. 

\end{proof}

\begin{lemma}\label{lem-Y}
Under Assumptions \ref{asm-p}--\ref{asm-Bg}, if the step size $\eta$ satisfies \eqref{eq-stepsize-1}, we have
\begin{equation*}
\begin{aligned}
&\frac{1}{n}\sum_{i=1}^n \bE \| Y_i^{k+1} - \overline{Y}^{k+1} \|^2\\
&\le 2 \eta^2 {\theta_r^2}\tau^2  L_f^2  \bE\|x^{k+1}-x^k \|^2 \\
&+  4 \eta^2{\theta_r^2}\tau^2  (\theta_m-1)B_g^2+2\eta^2{\theta_r^2}\tau^2  2 {\theta_m^3} L_f^2  \frac{1}{n\tau}\times\\
&\left(5 \tau \sum_{i=1}^n \bE\| Y_i^k -\overline{Y}^k \|^2 +5 \eta^2 {\theta_r^2} \tau^3 n \bE\|(P^k)^T\nabla f(x^k) \|^2\right).
\end{aligned}    
\end{equation*}    
\end{lemma}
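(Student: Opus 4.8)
The plan is to expand $Y_i^{k+1}-\overline{Y}^{k+1}$ explicitly and split it into three pieces whose second moments produce, respectively, the $\|x^{k+1}-x^k\|^2$ term, the variance term $(\theta_m-1)B_g^2$, and the drift term controlled by Lemma~\ref{lem-B}. I would begin by recording two structural facts. Since the second summand of $Y_i^k$ in \eqref{eq-def-Y} is exactly the dual variable $\Lambda_i^k$ from \eqref{eq-b23}, we have $Y_i^k=\eta\tau\theta_r (P^k)^T\nabla f_i(x^k)+\Lambda_i^k$; averaging over $i$ and using $\frac{1}{n}\sum_i\Lambda_i^k=0$ from \eqref{eq-b22} gives $\overline{Y}^{k}=\eta\tau\theta_r (P^k)^T\nabla f(x^k)$ (and the same at index $k+1$). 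Hence
\[
Y_i^{k+1}-\overline{Y}^{k+1}=\eta\tau\theta_r (P^{k+1})^T\big(\nabla f_i(x^{k+1})-\nabla f(x^{k+1})\big)+\Lambda_i^{k+1}.
\]

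Next I would substitute the closed form $\Lambda_i^{k+1}=\eta (P^{k+1})^TP^k\sum_{t=0}^{\tau-1}(\bar g(B_i^{k,t})-g_i(B_i^{k,t}))$ from \eqref{eq-b23}, expand $g_i$ via its definition, and add and subtract $\nabla f_j(x^k)$ inside each gradient. Writing $\Delta_i^k:=\nabla f_i(x^k)-\nabla f(x^k)$, this rewrites $\bar g(B_i^{k,t})-g_i(B_i^{k,t})$ as $\theta_r(P^k)^T(-\Delta_i^k+\mathrm{drift}_{i,t})$, where $\mathrm{drift}_{i,t}$ collects the smoothness residuals $\nabla f_\bullet(x^k+P^kB_\bullet^{k,t})-\nabla f_\bullet(x^k)$ and their client average. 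Factoring out $\eta\tau\theta_r (P^{k+1})^T$, I obtain $Y_i^{k+1}-\overline{Y}^{k+1}=\eta\tau\theta_r(P^{k+1})^T(a_i+b_i+c_i)$ with $a_i=\Delta_i^{k+1}-\Delta_i^k$, $b_i=(I-P^k(P^k)^T)\Delta_i^k$, and $c_i=\frac1\tau P^k(P^k)^T\sum_t \mathrm{drift}_{i,t}$. The key algebraic move is that the $-\Delta_i^k$ term recombines with $\Delta_i^k$ to isolate the mean-zero projection fluctuation $b_i$.

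Then I would take expectations, exploiting two independence facts. Because $P^{k+1}$ is independent of all round-$k$ quantities and $\bE[P^{k+1}(P^{k+1})^T]=I$ (Assumption~\ref{asm-p}), conditioning on the round-$k$ information gives $\bE\|(P^{k+1})^Tw\|^2=\|w\|^2$ for any $w$ measurable at round $k$, so the prefactor contributes an exact identity in second moment with no spurious $\theta_m$. After pulling it out and using $\|a+b+c\|^2\le 2\|a\|^2+4\|b\|^2+4\|c\|^2$, I bound the three averages. For $a_i$, the variance identity $\frac1n\sum_i\|u_i-\bar u\|^2\le\frac1n\sum_i\|u_i\|^2$ with $L_f$-smoothness yields $\frac1n\sum_i\|a_i\|^2\le L_f^2\|x^{k+1}-x^k\|^2$. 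For $b_i$, since $x^k$ and hence $\Delta_i^k$ is independent of $P^k$, and $(P^k(P^k)^T)^2=\theta_m P^k(P^k)^T$ gives $\bE[(I-P^k(P^k)^T)^2]=(\theta_m-1)I$, Assumption~\ref{asm-Bg} yields $\frac1n\sum_i\bE\|b_i\|^2\le(\theta_m-1)\frac1n\sum_i\|\Delta_i^k\|^2\le(\theta_m-1)B_g^2$. For $c_i$, the operator bounds $\|P^k(P^k)^Tv\|\le\theta_m\|v\|$ and $\|P^kB\|^2\le\theta_m\|B\|^2$, Jensen over $t$, the same variance identity over clients, and $L_f$-smoothness reduce $\frac1n\sum_i\bE\|c_i\|^2$ to $\frac{\theta_m^3L_f^2}{\tau}\cdot\frac1n\sum_i\sum_t\bE\|B_i^{k,t}\|^2$, into which I substitute Lemma~\ref{lem-B}.

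The main obstacle is the bookkeeping of the second step: recombining the consensus gradient gap $\Delta_i^k$ with $P^k(P^k)^T$ so that the genuinely mean-zero, variance-producing term $b_i$ is separated cleanly from the smoothness residual $a_i$ and the local-drift term $c_i$, while threading the nested conditioning so that $(P^{k+1})^T$ gives an exact identity and $P^k(P^k)^T$ gives exactly the factor $\theta_m-1$. Matching the stated constants ($2$, $4$, and $2\cdot 2\theta_m^3$) further requires using the variance identity—rather than a crude triangle inequality—at each of the three places where a client or time average is subtracted.
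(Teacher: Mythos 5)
Your proof is correct and follows essentially the same route as the paper's: both drop the outer $(P^{k+1})^T$ via independence and $\bE[P^{k+1}(P^{k+1})^T]=I$, split the remainder into the same three pieces (the cross-round gradient difference giving $L_f^2\|x^{k+1}-x^k\|^2$, the projection-variance term giving $(\theta_m-1)B_g^2$, and the local-drift term fed into Lemma~\ref{lem-B}), and arrive at identical constants. The only difference is bookkeeping: the paper applies the client-variance inequality $\frac{1}{n}\sum_i\|u_i-\bar{u}\|^2\le\frac{1}{n}\sum_i\|u_i\|^2$ once at the very start to discard the centering, whereas you carry the centered quantities $\Delta_i^k$ and $\Lambda_i^{k+1}$ through the whole argument and apply that inequality separately inside each of your three terms $a_i$, $b_i$, $c_i$.
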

\begin{proof}
By the definition of $Y_i^k$ defined in \eqref{eq-def-Y}, we have
\begin{equation*}
\begin{aligned}
&\frac{1}{n}\sum_{i=1}^n \bE\| Y_i^{k+1} - \overline{Y}^{k+1} \|^2\\
&= \eta^2\tau^2\frac{1}{n}\sum_{i=1}^n \bE\|(P^{k+1})^T\Big( {\frac{r}{m}}{\nabla f_i(x^{k+1})} \\
&-P^k \frac{1}{\tau}\sum_{t=0}^{\tau-1} g_i(B_i^{k,t})  + P^k\frac{1}{\tau}\sum_{t=0}^{\tau-1} \bar{g}(B_i^{k,t})-  {\frac{r}{m}} {\nabla f(x^{k+1})}  \Big) \|^2\\
&\le \! \frac{\eta^2\tau^2}{n} \!\sum_{i=1}^n \bE\|(P^{k+1})^T \Big(\! {\frac{r}{m}}{\nabla f_i(x^{k+1})} \! -\! P^k\frac{1}{\tau}\sum_{t=0}^{\tau-1} g_i(B_i^{k,t}) \!\Big)\|^2\\
&\le \eta^2\tau^2\frac{1}{n}\sum_{i=1}^n \bE\|  {\frac{r}{m}}{\nabla f_i(x^{k+1})}  - P^k\frac{1}{\tau}\sum_{t=0}^{\tau-1} g_i(B_i^{k,t}) \|^2,
\end{aligned}    
\end{equation*}
where we use $\frac{1}{n}\sum_{i=1}^n\|Y_i-\overline{Y}\|^2\le \frac{1}{n}\sum_{i=1}^n\|Y_i\|^2$ in the first inequality, use the fact that $P^{k+1}$ is independent of the term $ {\frac{r}{m}}{\nabla f_i(x^{k+1})}  - P^k\frac{1}{\tau}\sum_{t=0}^{\tau-1} g_i(B_i^{k,t})$ in the last inequality. 
Adding and subtracting $\nabla f_i(x^k)$, the above inequality yields  
\begin{align}\label{eq-44}
&\frac{1}{n}\sum_{i=1}^n\bE\| Y_i^{k+1} - \overline{Y}^{k+1} \|^2\\\notag
&\le\!  \frac{\eta^2\tau^2}{n}\!\sum_{i=1}^n \bE\| {\frac{r}{m}}\! \left( \nabla f_i(x^{k+1})\!\pm\! \!\nabla f_i(x^k) \right) \!-\! P^k\frac{1}{\tau}\!\sum_{t=0}^{\tau-1} \!g_i(B_i^{k,t}) \|^2 \\\notag
&\le \!2 \eta^2 {\theta_r^2}\tau^2  L_f^2  \bE\|x^{k+1}-x^k \|^2 \\\notag
&+\! \frac{2 \eta^2 {\theta_r^2} \tau^2}{n}\!\sum_{i=1}^n \underbrace{\!\bE\| \nabla\! f_i(x^k) \!-\!P^k(P^k)^T\!\frac{1}{\tau}\!\sum_{t=0}^{\tau-1}\nabla f_i(x^k\!+\!P^k B_i^{k,t})\|^2\!}_{\rm(III) }. \notag 
\end{align}
For the term (III), we have 
\begin{align}\label{eq-b44}
&{\rm(III)} = \bE\|\nabla f_i(x^k)\pm  P^k (P^k)^T \nabla f_i(x^k) \\ \notag
&-P^k(P^k)^T \frac{1}{\tau}\sum_{t=0}^{\tau-1} {\nabla f_i(x^k+P^k B_i^{k,t})} \|^2 \\\notag
&\le 2 \bE \|\left(I\!-\!P^k(P^k)^T\right) \nabla f_i(x^k)\|^2 + 
 2 \theta_m^2 L_f^2 \frac{1}{\tau } \sum_{t=0}^{\tau-1}  \bE\|P^k B_i^{k,t}\|^2 \\\notag
 &\le2 \left(\theta_m-1 \right)\bE\|\nabla f_i(x^k)\|^2 + 
 2 \theta_m^2 L_f^2 \frac{1}{\tau } \sum_{t=0}^{\tau-1}  \bE\|P^k B_i^{k,t}\|^2,\notag
\end{align}    
where we use similar derivations as in \cite[Lemma 5]{he2024subspace} in the last inequality.
In  \eqref{eq-b44}, we need to bound the term $\bE \|\left(I-P^k(P^k)^T\right) \nabla f_i(x^k)\|^2$. Since \( P^k (P^k)^T \) equals the identity only in expectation, its variance is nonzero. Therefore, in later analysis, we will require the averaged bounded gradient Assumption \ref{asm-Bg} to derive an upper bound on $\frac{1}{n}\sum_{i=1}^n\bE \|\left(I-P^k(P^k)^T\right) \nabla f_i(x^k)\|^2$. Importantly, Assumption \ref{asm-Bg} is only used here in the analysis and is no longer needed in the full-space case when $P^k=I$. 

Substituting (III) into \eqref{eq-44} gives that
\begin{align}
&\frac{1}{n}\sum_{i=1}^n \bE\| Y_i^{k+1} - \overline{Y}^{k+1} \|^2\\\notag
&\le 2 \eta^2 {\theta_r^2}\tau^2  L_f^2  \bE\|x^{k+1}-x^k \|^2 +2\eta^2{\theta_r^2}\tau^2  2 {\theta_m^3} L_f^2  \frac{1}{n\tau} \times \notag \\
&\Big(5 \tau  \sum_{i=1}^n \bE\| Y_i^k -\overline{Y}^k \|^2  +5 \eta^2 {\theta_r^2} \tau^3 n \bE\|(P^k)^T\nabla f(x^k) \|^2 \Big) \notag\\\notag
&+  4 \eta^2{\theta_r^2}\tau^2  (\theta_m-1)\frac{1}{n}\sum_{i=1}^n\bE\|\nabla f_i(x^k)\|^2.   
\end{align}

After substituting $ \frac{1}{n}\sum_{i=1}^n\|\nabla f_i(x^k)\|^2\le B_g^2$, we complete the proof of Lemma \ref{lem-Y}.
\end{proof}

\begin{lemma}\label{lem-lyp}
Under Assumptions \ref{asm-p}--\ref{asm-Bg}, if the step size $\eta$ satisfies \eqref{eq-stepsize-1}, \eqref{eq-stepsize-3} and \eqref{eq-stepsize-2}, we have 
\begin{equation*}
\begin{aligned}
\bE [\Omega^{k+1}] 
&\!\le\! \bE [\Omega^k]  \!-\! \frac{\eta \tau}{8\theta_m}  \bE \| \nabla f(x^k)\|^2 + 4 \eta^2{\theta_r^2}\tau^2 \theta_m \! \left(\theta_m\!-\!1 \right)\! {\frac{B_g^2}{n}}.  
\end{aligned}
\end{equation*}
\end{lemma}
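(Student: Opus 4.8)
The plan is to introduce a Lyapunov function that couples the objective with the client-drift surrogate already controlled in Lemma \ref{lem-Y}, namely
\[
\Omega^k := f(x^k) + \frac{\theta_m}{n}\cdot\frac{1}{n}\sum_{i=1}^n \|Y_i^k - \overline{Y}^k\|^2 ,
\]
and to show it drops by $\tfrac{\eta\tau}{8\theta_m}\bE\|\nabla f(x^k)\|^2$ per round up to the irreducible variance floor. The weight $\theta_m/n$ is chosen precisely so that $\tfrac{\theta_m}{n}$ times the $(\theta_m-1)B_g^2$ floor of Lemma \ref{lem-Y} reproduces the stated term $4\eta^2\theta_r^2\tau^2\theta_m(\theta_m-1)B_g^2/n$; every other $\mathcal{O}(\eta^2)$ contribution must be absorbed.

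First I would establish a one-step descent on $f$. Using the server update $x^{k+1}-x^k = -\eta P^k\sum_{t=0}^{\tau-1}\bar g(B_i^{k,t}) = -\eta\tfrac{r}{m}P^k(P^k)^T(\tau\nabla f(x^k)+E^k)$, where $E^k := \sum_{t=0}^{\tau-1}\tfrac{1}{n}\sum_{i=1}^n(\nabla f_i(x^k+P^kB_i^{k,t})-\nabla f_i(x^k))$, I apply $L_f$-smoothness. The key step is taking expectation over $P^k$: since $P^k$ is independent of the round-$k$ filtration, the leading inner product becomes $\eta\tau\langle\nabla f(x^k),\tfrac{r}{m}\bE[P^k(P^k)^T]\nabla f(x^k)\rangle = \eta\tau\sum_l\tfrac{r_l}{m_l}\|\nabla_l f(x^k)\|^2 \ge \tfrac{\eta\tau}{\theta_m}\bE\|\nabla f(x^k)\|^2$, using $\bE[P^k(P^k)^T]=I$ and $\tfrac{r_l}{m_l}\ge 1/\theta_m$; this produces the main negative term with no variance floor. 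The residual inner product with $E^k$ is handled by Young's inequality together with $\|E^k\|^2\le \tau\theta_m L_f^2\sum_{t}\tfrac{1}{n}\sum_i\|B_i^{k,t}\|^2$, turning it into a fraction of $\|\nabla f\|^2$ plus a drift term, and the smoothness quadratic $\tfrac{L_f\eta^2}{2}\bE\|x^{k+1}-x^k\|^2$ is $\mathcal{O}(\eta^2)$ and reduces similarly via $\|P^k(P^k)^T z\|^2\le\theta_m^2\|z\|^2$.

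Next I would invoke Lemma \ref{lem-B} to replace every drift term $\tfrac{1}{n}\sum_i\sum_t\bE\|B_i^{k,t}\|^2$ by $5\tau\,\tfrac{1}{n}\sum_i\bE\|Y_i^k-\overline{Y}^k\|^2 + 5\eta^2\theta_r^2\tau^3\bE\|(P^k)^T\nabla f(x^k)\|^2$, and bound $\|(P^k)^T\nabla f\|^2\le\theta_m\|\nabla f\|^2$ via \eqref{eq-18-1}. Adding $\tfrac{\theta_m}{n}$ times Lemma \ref{lem-Y} to the descent, and substituting the $\mathcal{O}(\eta^2)$ bound for $\bE\|x^{k+1}-x^k\|^2$ that appears on its right-hand side, I collect terms. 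The coefficient of $\bE\|\nabla f(x^k)\|^2$ splits into the main $-\tfrac{\eta\tau}{\theta_m}$ and several higher-order error coefficients, while the coefficient of the surrogate $\tfrac{1}{n}\sum_i\bE\|Y_i^k-\overline{Y}^k\|^2$ on the right consists of $\tfrac{\theta_m}{n}\cdot 20\eta^2\theta_r^2\tau^2\theta_m^3 L_f^2$ from Lemma \ref{lem-Y} plus the Lemma \ref{lem-B} contributions from the descent errors. The conditions \eqref{eq-stepsize-3} and \eqref{eq-stepsize-2} are exactly what force the surrogate coefficient below $\tfrac{\theta_m}{n}$ (so it telescopes inside $\Omega$) and the aggregate $\|\nabla f\|^2$ error coefficient below $\tfrac{7\eta\tau}{8\theta_m}$, leaving the net $-\tfrac{\eta\tau}{8\theta_m}$.

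I expect the main obstacle to be the coupled coefficient bookkeeping rather than any single inequality. Two points need care: the drift-displacement term $E^k$ is correlated with $P^k$ (because $B_i^{k,t}$ depends on $P^k$), so it cannot be linearized in expectation like the leading term and must instead be absorbed through smoothness and Lemma \ref{lem-B} — which is why Assumption \ref{asm-Bg} is needed to close the $(I-P^k(P^k)^T)\nabla f_i$ variance inside Lemma \ref{lem-Y}; and the term $\bE\|x^{k+1}-x^k\|^2$ reappears on the right of Lemma \ref{lem-Y}, creating a mild self-reference that is resolved only because it is $\mathcal{O}(\eta^2)$ and thus dominated after imposing the step-size bounds. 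Verifying that, after all substitutions, the surviving variance floor is exactly $\tfrac{\theta_m}{n}\cdot 4\eta^2\theta_r^2\tau^2(\theta_m-1)B_g^2$ and that every remaining second-order term is absorbed is the crux of the argument.
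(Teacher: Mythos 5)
Your proposal is correct and follows essentially the same route as the paper's proof: the same Lyapunov function coupling $f$ with the $Y$-deviation (the paper takes $\Omega^k = f(x^k)-f^{\star}+\frac{1}{n^2}\sum_{i=1}^n\|Y_i^k-\overline{Y}^k\|^2$, i.e.\ weight $1/n$, and absorbs an extra factor $\theta_m\ge 1$ by over-bounding, whereas your weight $\theta_m/n$ reproduces the stated constant exactly), the same invocation of Lemmas \ref{lem-B} and \ref{lem-Y}, the same resolution of the self-referential $\bE\|x^{k+1}-x^k\|^2$ term, and the same absorption of all higher-order terms through the step-size conditions. The only mechanical difference is that you linearize the leading inner product in expectation immediately via $\bE[P^k(P^k)^T]=I$ and handle the residual $E^k$ by Young's inequality, while the paper keeps $\|(P^k)^T\nabla f(x^k)\|^2$ through a polarization identity and converts it to $\bE\|\nabla f(x^k)\|^2$ only at the end via the exact trace identity \eqref{eq-g-46}; both rest on the same assumption and yield the same conclusion.
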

\begin{proof}
Define $\overline{\nabla f}(x^k+P^k B_i^{k,t}):= \frac{1}{n}\sum_{i=1}^n \nabla f_i(x^k+P^k B_i^{k,t})$.
According to 
the $L_f$-smoothness, we have 
\begin{align}\label{eq-49}
&\bE [f(x^{k+1})] \\ \notag
&\le \bE [f(x^k) + \langle \nabla f(x^k), x^{k+1}-x^k \rangle + \frac{L_f}{2} \|x^{k+1}-x^k \|^2]\\\notag
&=\! \bE [f(x^k) \!-\! \eta\tau \big\langle \nabla f(x^k),\!  \frac{r}{m}P^k   (P^k)^T \frac{1}{\tau}\sum_{t=0}^{\tau-1}\! \overline{ \nabla f}(x^k\!+\!P^k B_i^{k,t}) \big\rangle] \\\notag
&+ {\theta_r^2}\frac{\eta^2\tau^2 L_f}{2} \bE \| P^k   (P^k)^T\frac{1}{\tau}\sum_{t=0}^{\tau-1} \overline{\nabla f}(x^k+P^k B_i^{k,t})\|^2 \\\notag
&\le \bE [f(x^k)] -\frac{\eta\tau}{2 {\theta_m}} \bE\|{(P^k)^T} \nabla f(x^k)\|^2 \\\notag
&-\frac{\eta\tau}{2{\theta_m}} \bE\| (P^k)^T \frac{1}{\tau}\sum_{t=0}^{\tau-1} \overline{ \nabla f}(x^k+P^k B_i^{k,t}) \|^2 \\\notag
&+ {\theta_r}\frac{\eta\tau}{2} \underbrace{\bE\| (P^k)^T \frac{1}{\tau}\sum_{t=0}^{\tau-1} \left(\overline{ \nabla f}(x^k+P^k B_i^{k,t}) - \nabla f(x^k) \right)\|^2}_{\rm (IV)}\\\notag
&+{\theta_r^2}\frac{\eta^2\tau^2 L_f}{2} \theta_m {\bE\|   (P^k)^T\frac{1}{\tau}\sum_{t=0}^{\tau-1} \overline{\nabla f}(x^k+P^k B_i^{k,t})\|^2},
\end{align}
where we substitute $$ x^{k+1}
=x^k-\eta \frac{r}{m} P^k(P^k)^T \sum_{t=0}^{\tau-1}   \overline{\nabla f}(x^k+P^k B_i^{k,t})$$ in the first inequality.

For the term (IV), we have 
\begin{equation*}
\begin{aligned}
&{\rm (IV)}\le {\theta_m} \bE \|  \frac{1}{\tau}\sum_{t=0}^{\tau-1} \left( \overline{ \nabla f}(x^k+P^k B_i^{k,t}) -  \nabla f(x^k) \right)\|^2 \\
&\le {\theta_m} \frac{1}{\tau} \sum_{t=0}^{\tau-1} \bE\|  \frac{1}{n}\sum_{i=1}^n \left(\nabla f_i(x^k+P^k B_i^{k,t}) -  \nabla f_i(x^k)\right) \|^2\\
&\le {\theta_m}L_f^2 \frac{1}{\tau n} \sum_{t=0}^{\tau-1} \sum_{i=1}^n  \bE\|   P^k B_i^{k,t} \|^2.
\end{aligned}    
\end{equation*}
Then substituting (IV) into  \eqref{eq-49} gives 
\begin{equation*}
\begin{aligned}
&\bE [f(x^{k+1})] \le \bE [f(x^k)] -\frac{\eta\tau}{2 {\theta_m}} \bE\|{(P^k)^T} \nabla f(x^k)\|^2 \\
&-\!\frac{\eta\tau}{2{\theta_m}} \bE\| (P^k)^T \frac{1}{\tau}\sum_{t=0}^{\tau-1} \overline{ \nabla f}(x^k+P^k B_i^{k,t}) \|^2 \\
&+\! {\theta_r}\frac{\eta\tau}{2} {\theta_m}L_f^2 \frac{1}{\tau n} \sum_{t=0}^{\tau-1} \sum_{i=1}^n  \bE\|   P^k B_i^{k,t} \|^2 \\
&+\frac{1}{2}{\theta_r^2}{\eta^2\tau^2 L_f} \theta_m  \bE\|(P^k)^T\frac{1}{\tau}\sum_{t=0}^{\tau-1} \overline{{ \nabla f}}(x^k\!+\!P^k B_i^{k,t})\|^2.
\end{aligned}    
\end{equation*}
After reorganizing the above inequality, we have 
\begin{align}\label{eq-g-39}
&\bE [f(x^{k+1})\!+\!\frac{\eta\tau}{2\theta_m} {\left( 1\!-\!\eta\tau L_f {\theta_r^2} \theta_m^2 \right)}\times \\ \notag
& \|  (P^k)^T\frac{1}{\tau}\sum_{t=0}^{\tau-1} \overline{ \nabla f}(x^k\!+\!P^k B_i^{k,t}) \|^2]\\ \notag
&\le\bE [f(x^k)] -\frac{\eta \tau }{2\theta_m}  \bE\| {(P^k)^T} \nabla f(x^k)\|^2+ {\theta_r}\frac{\eta }{2}   L_f^2{\theta_m^2} \frac{1}{n}\times\\ \notag
&\left(5 \tau \sum_{i=1}^n  \bE\| Y_i^k -\overline{Y}^k \|^2 +5 \eta^2 {\theta_r^2} \tau^3 n \bE\|(P^k)^T\nabla f(x^k) \|^2\right)\\ \notag
&\le\bE [f(x^k)] -\frac{\eta \tau}{4\theta_m}  \bE\| {(P^k)^T} \nabla f(x^k)\|^2\\ \notag
&+ {\theta_r}\frac{5\eta\tau }{2}   L_f^2{\theta_m^2}  \frac{1}{n}\sum_{i=1}^n \bE\| Y_i^k \!-\!\overline{Y}^k \|^2,\notag
\end{align}    
where we use \eqref{eq-b27} in the first inequality and 
\begin{equation}\label{eq-stepsize-3}
\begin{aligned}
{\theta_r}\frac{\eta }{2}   L_f^2{\theta_m^2} \frac{1}{n}\left(5 \eta^2 {\theta_r^2} \tau^3 n \right) \le \frac{\eta\tau}{4\theta_m} 
\end{aligned}    
\end{equation}
in the last inequality.
Recalling Lemma \ref{lem-Y},
we define 
\begin{equation}
\Omega^{k+1}: = f(x^{k+1})-f^{\star} +\frac{1}{n^2}\sum_{i=1}^n\| Y_i^{k+1} - \overline{Y}^{k+1} \|^2,  
\end{equation}
where $f^{\star}$ is the optimal value of problem \eqref{eqn:basic_opt}. 

Then from \eqref{eq-g-39} and Lemma \ref{lem-Y} we have
\begin{align}\label{eq-b52}
& \bE [\Omega^{k+1}] \le \bE [f(x^k)]-f^{\star}-\frac{\eta \tau}{4\theta_m}  \bE \| {(P^k)^T} \nabla f(x^k)\|^2 \notag \\ 
&+ {\theta_r}\frac{5\eta\tau }{2}   L_f^2{\theta_m^2}  \frac{1}{n}\sum_{i=1}^n \bE\| Y_i^k -\overline{Y}^k \|^2  \\\notag
&+ 2 \eta^2 {\theta_r^2}\tau^2  L_f^2  {\frac{1}{n}}\bE\|x^{k+1}-x^k \|^2 +2\eta^2{\theta_r^2}\tau^2  2 {\theta_m^3} L_f^2  \frac{1}{n\tau}{\frac{1}{n}}\times\\\notag
&\left(5 \tau \sum_{i=1}^n \bE\| Y_i^k -\overline{Y}^k \|^2 +5 \eta^2 {\theta_r^2} \tau^3 n \bE\|(P^k)^T\nabla f(x^k) \|^2\right) \\\notag
&+  4 \eta^2{\theta_r^2}\tau^2  (\theta_m-1) {\frac{1}{n}}B_g^2 \\\notag
&-\frac{\eta\tau}{2\theta_m} {\left( 1\!-\!\eta\tau L_f {\theta_r^2} \theta_m^2 \right)}{ \bE \| (P^k)^T \frac{1}{\tau}\sum_{t=0}^{\tau-1} \overline{ \nabla f}(x^k\!+\!P^k B_i^{k,t}) \|^2}.\notag
\end{align}    
To bound  $\bE\|x^{k+1}- x^k \|^2 $ in \eqref{eq-b52}, we substitute $x^{k+1}= x^k-\eta \frac{r}{m} P^k  (P^k)^T\sum_{t=0}^{\tau-1}  \overline{\nabla f}(x^k+P^k B_i^{k,t})$ to get 
\begin{equation}\label{eq-z-48}
\begin{aligned}
&\bE \|x^{k+1}- x^k\|^2\\
&\le \eta^2{\theta_r^2} \theta_m \bE \| (P^k)^T\sum_{t=0}^{\tau-1}  \overline{\nabla f}(x^k+P^k B_i^{k,t}) \|^2.
\end{aligned}    
\end{equation}
Substituting \eqref{eq-z-48} into \eqref{eq-b52}, we have 
\begin{align}\label{eq-g-44}
&\bE[ \Omega^{k+1}] \notag \\ \notag
&\le \bE [f(x^k)\!-\!f^{\star}\!-\! \frac{\eta \tau}{8\theta_m}  \| {(P^k)^T} \nabla f(x^k)\|^2 + \frac{1}{n^2}\sum_{i=1}^n \| Y_i^k\! -\!\overline{Y}^k \|^2] \\
&+  4 \eta^2{\theta_r^2}\tau^2 \theta_m  \left(\theta_m-1 \right) {\frac{1}{n}} B_g^2, 
\end{align}    
where we substitute 
\begin{equation}\label{eq-stepsize-2}
\begin{aligned}
&1-\eta\tau L_f {\theta_r^2} \theta_m^2 - \frac{2\theta_m}{\eta\tau}2 \eta^2 {\theta_r^2}\tau^2  L_f^2  {\frac{1}{n}} \eta^2{\theta_r^2} \theta_m \ge 0, \\
&\frac{\eta \tau}{4\theta_m}- {\theta_r^4} 2\eta^2 \tau^2 2L_f^2 \theta_m^3\frac{1}{n\tau} {\frac{1}{n}} 5\eta^2 \tau^3 n  \ge \frac{\eta\tau}{8\theta_m},\\
&{\theta_r}\frac{2 {n}5\eta\tau }{2}   L_f^2{\theta_m^2}\le1.
\end{aligned}
\end{equation}

With $\mathbb{E} [P_l^k (P_l^k)^T]= I$ for all $l,k$, taking the expectation on the randomness of $P^k$, we have 
\begin{align}\label{eq-g-46}
&\mathbb{E} [\| {(P^k)^T} \nabla f(x^k)\|^2]\\ \notag
&= \sum_{l=1}^L \bE \left[\mathbb{E} \left[  \| (P_l^k)^T \nabla_l f(x^k) \|_F^2 ~|~ x^k \right]\right]\\\notag
&= \sum_{l=1}^L \bE \left[\mathbb{E}\left[ \operatorname{Tr} \Big  (\nabla_l f(x^k))^T P_l^k (P_l^k)^T \nabla_l f(x^k) \Big) ~|~ x^k  \right]\right]\\\notag
&= \sum_{l=1}^L \bE  \operatorname{Tr}  \Big  (\nabla_l f(x^k))^T\mathbb{E}\left[ P_l^k (P_l^k)^T ~|~ x^k \right] \nabla_l f(x^k) \Big) \\\notag
&= \sum_{l=1}^L \bE [ \operatorname{Tr} \Big  (\nabla_l f(x^k))^T \nabla_l f(x^k) \Big) ] = \bE \| \nabla f(x^k)\|^2,\notag
\end{align}
where $\operatorname{Tr}(\cdot)$ is the trace of a matrix, $\bE[\cdot \mid x^k]$ is the  expectation conditioned on $x^k$, and we use the fact that $P_l^k$ is independent with $\nabla_{x_l} f(x^k)$ given $x^k $.

Then, substituting \eqref{eq-g-46} into \eqref{eq-g-44}, we have 
\begin{equation}\label{eq-g-48}
\begin{aligned}
 &\bE [\Omega^{k+1}] 
\le \bE [\Omega^k] \\
&- \frac{\eta \tau}{8\theta_m}  \bE \| \nabla f(x^k)\|^2 + 4 \eta^2{\theta_r^2}\tau^2 \theta_m  \left(\theta_m-1 \right) {\frac{1}{n}} B_g^2,
\end{aligned}    
\end{equation}
which completes the proof of Lemma \ref{lem-lyp}. 
\end{proof}

To summarize, the step size conditions \eqref{eq-stepsize-1}, \eqref{eq-stepsize-3}, and \eqref{eq-stepsize-2} are all satisfied if we choose 
\begin{equation}\label{eq-summarize-eta}
\eta \le \mathcal{O}\left(\frac{1}{\theta_r\theta_m^2 L_f^2 n \tau}\right).  
\end{equation}
By recursively using \eqref{eq-g-48}, we get Theorem \ref{thm}.

\end{document}